\newtheorem{theorem}{Theorem}[section]
\newtheorem{corollary}[theorem]{Corollary}
\newcolumntype{L}[1]{>{\raggedright\let\newline\\\arraybackslash\hspace{0pt}}m{#1}}
\newcolumntype{C}[1]{>{\centering\let\newline  \\\arraybackslash\hspace{0pt}}m{#1}}
\newcolumntype{R}[1]{>{\raggedleft\let\newline \\\arraybackslash\hspace{0pt}}m{#1}}
\DeclareMathOperator*{\argmin}{argmin}
\newenvironment{customthm}[1]
  {\innercustomthm}
  {\endinnercustomthm}
\title{BrainMAP: Learning  Multiple Activation Pathways in Brain Networks}
\author {
    Song Wang\textsuperscript{\rm 1}\thanks{Equal contribution.},
    Zhenyu Lei\textsuperscript{\rm 1}\footnotemark[1],
    Zhen Tan\textsuperscript{\rm 2},
    Jiaqi Ding\textsuperscript{\rm 3},
    Xinyu Zhao\textsuperscript{\rm 3},
    Yushun Dong\textsuperscript{\rm 4}, \\
    Guorong Wu\textsuperscript{\rm 3},
    Tianlong Chen\textsuperscript{\rm 3},
    Chen Chen\textsuperscript{\rm 1},
    Aiying Zhang\textsuperscript{\rm 1},
    Jundong Li\textsuperscript{\rm 1}
}
\begin{document}

\maketitle

\begin{abstract}
Functional Magnetic Resonance Image (fMRI) is commonly employed to study human brain activity, since it offers insight into the relationship between functional fluctuations and human behavior. To enhance analysis and comprehension of brain activity, Graph Neural Networks (GNNs) have been widely applied to the analysis of functional connectivities (FC) derived from fMRI data, due to their ability to capture the synergistic interactions among brain regions. 
However, in the human brain, performing complex tasks typically involves the activation of certain \textit{pathways}, which could be represented as paths across graphs. As such, conventional GNNs struggle to learn from these pathways due to the long-range dependencies of multiple pathways. 
To address these challenges, we introduce a novel framework \textbf{BrainMAP} to learn \textbf{\underline{M}}ultiple \textbf{\underline{A}}ctivation \textbf{\underline{P}}athways in \textbf{\underline{Brain}} networks. BrainMAP leverages sequential models to identify long-range correlations among sequentialized brain regions and incorporates an aggregation module based on Mixture of Experts (MoE) to learn from multiple pathways. Our comprehensive experiments highlight BrainMAP's superior performance. Furthermore, our framework enables explanatory analyses of crucial brain regions involved in tasks. Our code is provided at {https://github.com/LzyFischer/BrainMAP}.
\end{abstract}

\section{Introduction}




Recently, significant research has focused on learning complex patterns in brain activities, which has promoted tasks such as cognitive process decoding~\cite{li2019interpretable,thomas2022interpreting,finn2023functional} and the diagnosis of mental health disorders~\cite{jo2019deep, eslami2019asd}. 
%
%
Generally, brain activities could be represented as functional magnetic resonance imaging (fMRI) data~\cite{fox2007spontaneous, zhang2024metarlec}, which measures blood-oxygen-level-dependent (BOLD) responses and reflects changes in metabolic demand associated with neural activity~\cite{kohoutova2020toward, davis2020discovery}. By leveraging fMRI's unique blend of spatial and temporal characteristics, researchers can delve into the complexities of cognitive processes in the human brain~\cite{bassett2017network}. More specifically, 
BOLD signals are commonly used to construct networks of brain regions from fMRI data, where the functional connectivities (FC) among distinct brain regions are associated with various normal and pathological states~\cite{kawahara2017brainnetcnn}, as shown in Fig.~\ref{fig:example}. Studying the FC features that correspond to the different brain states
enables the identification of specific behavioral traits and neurological disorders linked to particular FC patterns~\cite{morris2019weisfeiler}.

\begin{figure}[!t]\center
\includegraphics[width=\columnwidth]{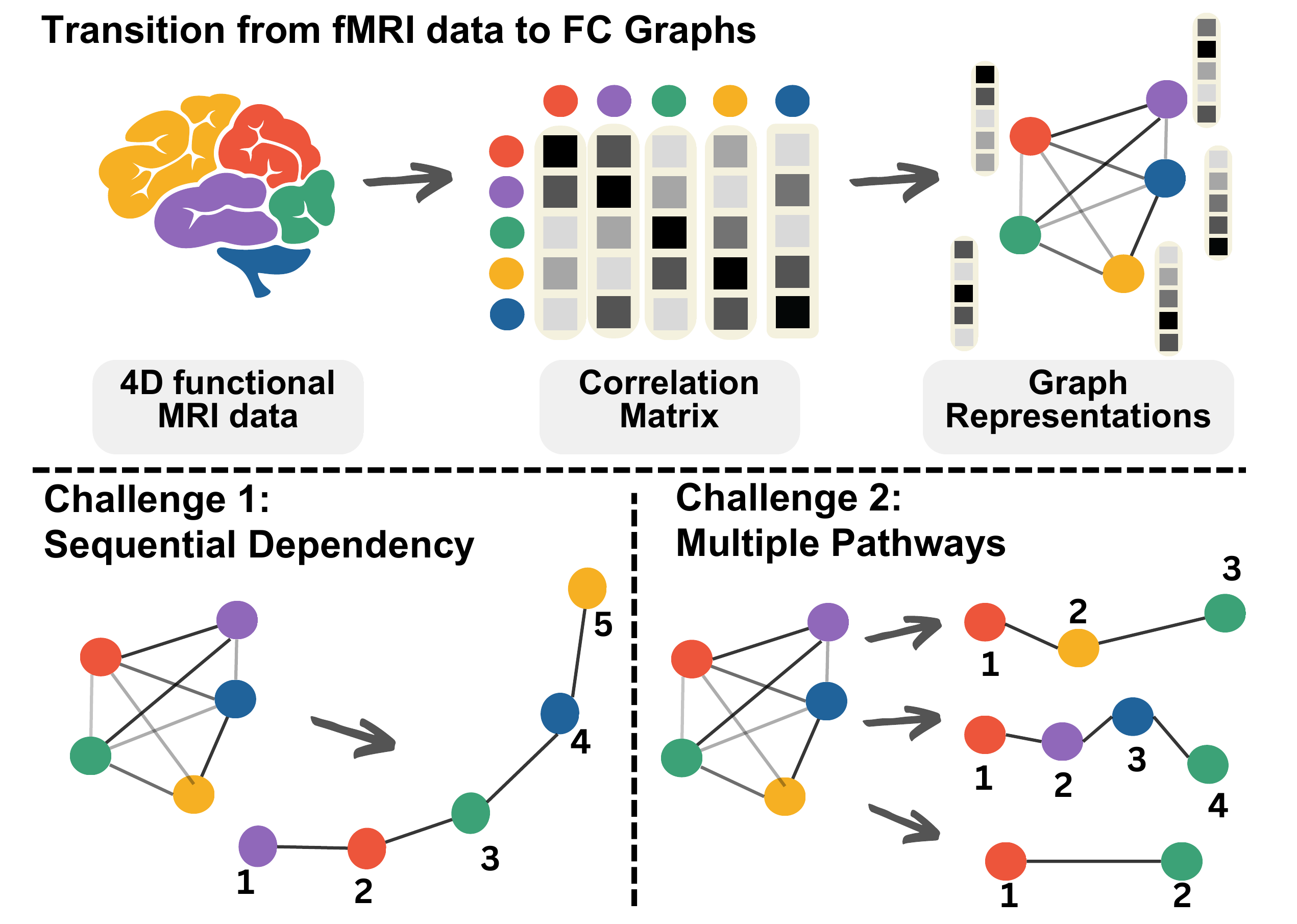}
\vspace*{-.15in}
\caption{An illustration of the transition from fMRI data to FC graphs, along with two challenges for learning from pathways in FC graphs: (1) The sequential dependency, a fundamental feature of human brain activity, is not naturally presented in FC graphs; (2) Multiple pathways exist in FC graphs, making the extraction of them more difficult.}
\vspace*{-.15in}
\label{fig:example}
\end{figure}

To extract patterns in FC features, they are generally modeled as FC graphs, where nodes represent brain Regions of Interest (ROIs), and edges represent their relationships~\cite{cui2022positional,cui2022braingb}. In this way, the correlations among brain regions could be explicitly represented~\cite{said2023neurograph}. With the development of Graph Machine Learning (GML) techniques, Graph Neural Networks (GNNs) are widely applied to FC graphs~\cite{wang2022contrastive, zhou2020graph}. By capitalizing on the structured nature of the FC graphs and integrating local information, GNNs facilitate learning from patterns in functional connectivities and informative features~\cite{li2021braingnn}.
While FC graphs offer valuable connectivity insights by depicting correlations among brain regions, existing  works often
overlook the \textbf{\textit{\underline{activation pathways}}} that are inherently present in these graphs.
Specifically, in the human brain, performing tasks typically involves the activation of certain pathways~\cite{sporns2011human}, which could be represented as paths across the FC graphs~\cite{sankar2018dynamic},
%
%
as shown in Fig.~\ref{fig:example}. These pathways indicate the transmission     of neural signals to a particular brain region. By incorporating these pathways into analysis, we could capture the complex interactions that might be overlooked when only considering pairwise correlations. Furthermore, these pathways provide insights into how different brain regions segregate into functional modules and collaborate to perform complex tasks.

However, despite the benefits of considering activation pathways on FC graphs, learning from these pathways is challenging. Since they are not explicitly represented in the graphs, without ground truth, models struggle to accurately learn and interpret them. Generally, pathways exhibit two crucial properties as shown in Fig.~\ref{fig:example}: (1) \textbf{Sequential dependency} is a fundamental feature of brain networks, where multiple regions co-activate and interact over long distances~\cite{dahan2021improving}. For example, in emotional memory processing, the hippocampus encodes memories, the amygdala assesses their emotional significance, and the prefrontal cortex uses this information for decision-making, exemplifying the long-range dependencies across multiple brain regions~\cite{said2023neurograph}. Nevertheless, while capturing these sequential dependencies is crucial for understanding the information flow in the brain, the structural nature of FC graphs makes it challenging to effectively model such dependencies.
(2) \textbf{Multiple pathways} are generally necessary for the brain to process different behaviors and perform complex tasks. For example, in visual processing, the brain utilizes two parallel pathways: one along the dorsal visual cortex, which handles fast but coarse information, and the ``what" stream along the ventral visual cortex, which processes slower but more detailed information~\cite{lee2016anatomy}. These distinct pathways correspond to different aspects of visual stimuli, emphasizing the need for multiple pathways in visual processing. Nevertheless, it is especially challenging to capture multiple pathways with existing GNN
architectures. Due to the inherent limitations of the message-passing mechanism~\cite{kipf2017semi, velivckovic2017graph, tan2022transductive}, which focuses on aggregating information from neighboring nodes, GNNs struggle to effectively model the complex, long-range interactions in multiple pathways~\cite{kim2021learning}. Moreover, the interpretability of functional connectivity patterns is underexplored in current GNN-based approaches. Existing interpretable GNN models~\cite{ying2019gnnexplainer, luo2020parameterized}, which are typically designed to explain the importance of individual nodes and edges rather than considering their relationships within an activation path, struggle to provide explanations for interactions in long-range paths.


In this work, we propose \textbf{BrainMAP} to effectively learn from and interpret \textbf{\underline{M}}ultiple \textbf{\underline{A}}ctivation \textbf{\underline{P}}athways present in FC (functional connectivities) graphs while tackling the challenges posed by long-range dependencies and pathway correlations. To achieve this: (1) We propose an \textbf{Adaptive Graph Sequentialization} module to transform each FC graph into a node sequence that reflects the order of information flow, which enables the extraction of the hidden pathways that are crucial for modeling long-range interactions. 
(2) We design a \textbf{Hierarchical Pathway Integration} strategy that analyzes correlations among multiple pathways. Inspired by the human brain's use of parallel pathways in complex tasks, we propose to integrate insights from diverse pathways, which captures complementary information contributed by each pathway. 
More importantly, our design improves interpretability by identifying the crucial brain regions in pathways that work together to support brain functions.
Such interpretability offers deeper insights into the functional 
co-activated pattern of the brain. To evaluate our framework, We conduct experiments on five real-world fMRI datasets. The results demonstrate that our framework outperforms existing models in various prediction tasks on FC graphs while also offering comprehensive explanations for pathways. 
In summary, our contributions are as follows:
\begin{itemize}[leftmargin=0.4cm]
    \item \underline{\textbf{\textit{Innovation}.}} We present a novel framework for predictive tasks on FC graphs while providing comprehensive explanations to identify crucial brain regions—an area that has been underexplored in prior research.
    \item \underline{\textbf{\textit{Architecture}.}} We design an Adaptive Sequentialization module to transform FC graphs into node sequences for pathway learning, and a Pathway Integration module to aggregate and analyze correlations across multiple pathways on FC graphs.
    \item \underline{\textbf{\textit{Validation}.}} We conduct extensive experiments on various real-world FC datasets, and the results demonstrate the superior performance of our framework in both predictions and explanations.
\end{itemize}

\section{Related Work}
\subsection{Brain Network Analysis}
Brain network analysis aims to understand the intricate patterns of connectivity within the brain~\cite{cui2022braingb, kan2022brain, zhang2022probing,hsu2024thought, zhang2024metarlec, gao2024local}, which has gained increasing popularity recently due to its various applications, including identifying biomarkers for neurological diseases~\cite{chang2021machine, yang2022data}, understanding cognitive processes~\cite{liu2023coupling, article}, and distinguishing different types of brain networks~\cite{liao2024joint}. Among these, one of the most important tasks is the prediction of brain-related attributes, such as demographics and task states~\cite{said2023neurograph, he2020deep}. Recently, GNNs have significantly evolved as a major field of exploration for these tasks~\cite{li2022brain, cui2022braingb}, due to their extraordinary ability to leverage the structured data~\cite{li2021braingnn,xu2024learning, wang2022glitter}. 
Nevertheless, GNN-based approaches often struggle to fully exploit the useful knowledge in brain networks, particularly the activation pathways that are inherently present in brains~\cite{keller2018task} are neglected. To address this limitation, we propose to extract multiple underlying activation pathways with adaptive structure sequentialization and Mixture of Experts (MoE), which thus enables a more comprehensive understanding of the brain connection. 


\subsection{Mixture of Experts}

The Mixture of Experts (MoE) approach involves deploying a collection of expert networks, each designed to specialize in a particular task or a subset of the input space~\cite{shazeer2017outrageously,wang2024one}. Originally derived from traditional machine learning models~\cite{jacobs1991adaptive,jordan1994hierarchical}, MoE has since been adapted for deep learning, significantly enhancing its ability to handle complex vision and language tasks~\cite{jiang2024mixtral}. 
In addition to the strategy of interesting  MoE layers with conventional neural networks~\cite{vaswani2017attention, dauphin2017language}, the concept of MoE is also extended to large and independent modules, e.g., language models as agents~\cite{wang2024mixture}.
In this work, we extend the MoE framework to address the challenge of multiple pathways in brain networks, focusing on learning the correlations across pathways. As a result, our framework is able to extract multiple pathways within and across different orders while learning from them.


\section{Preliminary}

%
In this work, we define an FC graph \( G \) as \( G = (\mathcal{V}, \mathcal{E}, \mathbf{A}, \mathbf{X}) \), where \( \mathcal{V} \) represents the set of nodes that indicate brain regions, \( \mathcal{E}\) denotes the edges that illustrate functional connections between these regions, \( \mathbf{A}\in\mathbb{R}^{N\times N} \) is the adjacency matrix capturing the connectivity structure, and \( \mathbf{X}\in\mathbb{R}^{N\times d} \) denotes node features that may include various biological markers or other relevant attributes. The total number of vertices in the graph is represented by \( N \), such that \( |\mathcal{V}| = N \), and let $d$ be the number of dimensions in the input feature of each node. We use ${Y}$ to denote the prediction target of each graph in classification or regression tasks.


\begin{figure*}[!t]\center

\includegraphics[width=0.95\textwidth]{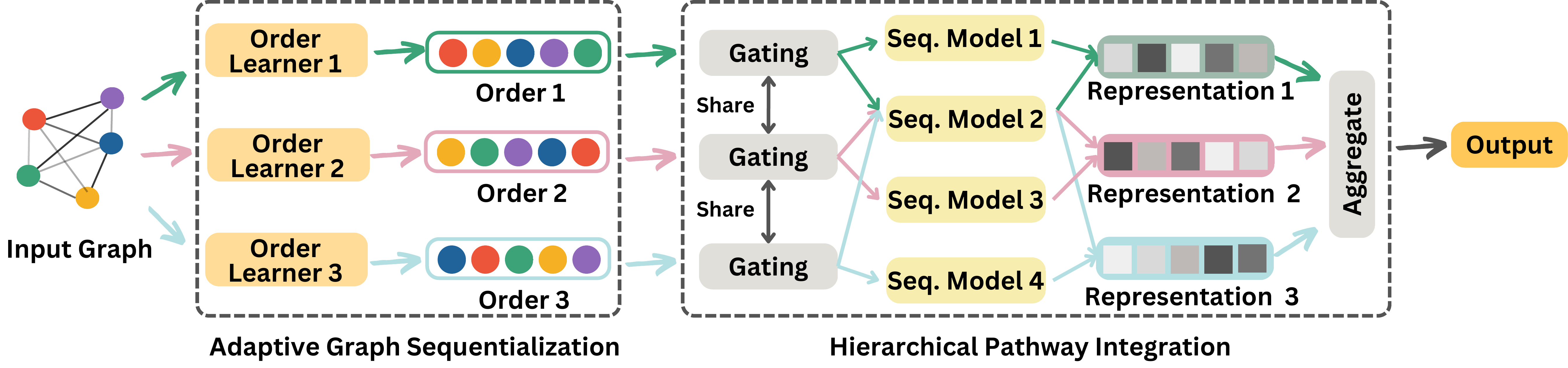}
\vspace*{-.05in}
\caption{The overall process of BrainMAP. We first adaptively learn $M$ ($M=3$ in the figure) orders with three order-learners (GNNs). Then these orders are input into the gating function to select $K$ ($K=2$ in the figure) experts from a total number of $P$ ($P=4$ in the figure) experts. Each expert is implemented as a sequential model. The output of these experts will be aggregated into a representation for each order. Finally, the representations from all orders are aggregated into the output.}
\vspace*{-.1in}
\label{fig:framework}
\end{figure*}

\section{Methodology}
An overview of BrainMAP is presented in Fig~\ref{fig:framework}. Specifically, BrainMAP is composed of two components: (1) \textit{\textbf{Adaptive Graph Sequentialization}}, which learns the optimal sequence of brain regions by transforming the FC graph structure into a meaningful order that captures key dependencies, 
    and (2) \textit{\textbf{Hierarchical Pathways Aggregation}}, which utilizes multiple experts to extract diverse pathways from different orders and then aggregates them to capture complex interactions across multiple pathways. Each expert is instantiated as a sequential model such as Transformer~\cite{vaswani2017attention} or Mamba~\cite{gu2023mamba}, in order to extract long-range dependencies within potential pathways. 

\subsection{Adaptive Graph Sequentialization}
\label{4.1}

When performing tasks such as visual or motor activities, research has shown that multiple brain regions often collaborate over long distances rather than functioning in isolation, which means cognitive processes emerge from the sequential activation of these regions~\cite{thiebaut2022emergent}. Consequently, capturing the order of sequential activation paths is crucial for accurate prediction in brain networks. Nevertheless, due to the complex structure of FC graphs, it is difficult to identify and extract such pathways. To address this, we propose an adaptive sequentialization strategy that transforms each FC graph into a sequence, in order to preserve key pathway information and facilitate more effective modeling of the brain's dynamic processes.



\paragraph{Learning Orders for FC Graphs.} 
A significant obstacle in converting FC graphs into node sequences lies in the permutation invariance of brain network regions (i.e., nodes)~\cite{said2023neurograph}. This invariance contrasts with the inherently sequential nature of activation pathways, which do not naturally account for such invariance. 

To tackle this, we introduce a learning-based strategy that utilizes an order-learning GNN to adaptively determine the node order for each input FC graph. 
With the order-learning GNN, we aim to learn the optimal sequence of nodes by arranging them based on their learned ordering scores in an ascending order. The benefit of using the learned scores to describe the order is that it avoids the massive search space of possible node orders (i.e., $N!$ for a graph of size $N$), which would otherwise make exhaustive search infeasible. 
In the following, we describe the process of learning the ordering scores with order-learning GNN. Given an input graph $G$, the ordering score  $s_i\in\mathbb{R}$ of node $v_i$ in $G$ is learned as:
\begin{equation}
    s_i=\text{GNN}_l(\mathcal{V}_i, \mathcal{E}_i, \mathbf{X}_i),\ \ \text{where}\ \ \mathcal{V}_i=\mathcal{N}_i\cup\{v_i\}.
    \label{eq:ordering_score}
\end{equation}
Here $\mathbf{X}_i$ is the feature matrix of $\mathcal{V}_i$, which is the set of neighboring nodes of $v_i$. $\mathcal{E}_i$ is the set of edges for nodes in $\mathcal{V}_i$. $\text{GNN}_l$ is the order-learning GNN. 
%

 With the ordering scores $\{s_1, s_2,\dotsc, s_N\}$ of nodes in $G$, calculated in Eq.~(\ref{eq:ordering_score}), we obtain the order $\hat{\boldsymbol{\phi}}$ of $N$ nodes $\{v_1, v_2,\dotsc, v_N\}$ as follows:
\begin{equation}
\begin{aligned}
    \hat{\boldsymbol{\phi}}&=(v_{\pi(1)},v_{\pi(2)},\dotsc, v_{\pi(N)}), \\ \text{where}&\ \pi(i) = \argmin_{j \notin\{\pi(1), \pi(2),\dots, \pi(i-1)\}} s_j.
\end{aligned}
\end{equation}
Here $\pi$ is a permutation of indices that sorts the scores $\{s_1,s_2,\dotsc, s_N\}$ in an ascending order. $\hat{\boldsymbol{\phi}}$ denotes the obtained order of $N$ nodes in the input graph.

\paragraph{Optimization of Order-Learning GNNs.} 
To optimize the order-learning GNN, an obvious challenge is the lack of ground-truth orders. That being said, the optimal node order that consists of sufficient pathway information remains unavailable. Therefore, we propose to use the loss of BrainMAP output to select good and bad orders as the supervision signal. Intuitively, the orders that could provide smaller losses regarding the correct label (or ground-truth values in regression tasks) should be more similar to the optimal orders.
%
In concrete, within each training step, we first randomly sample a batch of orders and compute their corresponding output. Then we select $N_p$ orders with the smallest losses as positive samples (denoted as $\Phi_p$), and select $N_d$ orders with the largest losses as negative samples (denoted as $\Phi_n$). Based on the concept of contrastive learning~\cite{you2020graph,tan2022supervised,xu2023cldg,xu2024learning,wang2023contrast}, our optimization aims to increase the similarity between the learned order and the positive orders, while decreasing the similarity between the learned order and the negative orders. In this manner, we manage to gradually make the learned order approach the better orders during training. 
\begin{equation}
    \max \sum\limits_{\boldsymbol{\phi}\in{\Phi}_p} s(\hat{\boldsymbol{\phi}},{\boldsymbol{\phi}}) -\lambda\sum\limits_{\boldsymbol{\phi}\in{\Phi}_n} s(\hat{\boldsymbol{\phi}},{\boldsymbol{\phi}}) ,
\end{equation}
where $\lambda$ determines the relative importance of the two objectives. $s(\hat{\boldsymbol{\phi}},{\boldsymbol{\phi}})$ denotes the similarity between $\hat{\boldsymbol{\phi}}$ and ${\boldsymbol{\phi}}$.

To estimate the similarity between any two orders, a straightforward strategy is to leverage Spearman's rank correlation coefficient~\cite{spearman1904proof}. However, this coefficient is computed between two real ranks, each consisting of distinct values from 1 to $N$, which contrast the ordering scores we obtain. Moreover, directly converting the ordering scores into real ranks (i.e., from $1$ to $N$) would prevent the flow of gradients,  making optimization through gradient descent infeasible. To deal with this issue, we propose to calculate approximate rank scores as a substitute for the original ordering scores (i.e., $ s_i$). These rank scores are differentiable and can be optimized using gradient descent.

Denoting any learned order $\hat{\boldsymbol{\phi}}$, along with its ordering scores $\{s_1, s_2,\dotsc,s_N\}$,
we calculate the approximate rank score $S_i\in\mathbb{R}$ of node $v_i$ as follows:
\begin{equation}
\begin{aligned}
       & S_i=\frac{s_i - \mathbb{E}[s] }{\sqrt{\mathbb{E}[s^2] - (\mathbb{E}[s])^2}}\cdot\sqrt{\frac{N^2-1}{12}}+\frac{(N+1)(2N+1)}{6}, \\
       &\quad\text{where}\quad  \mathbb{E}[s]=\frac{1}{N}\sum\limits_{i=1}^Ns_i, \quad \mathbb{E}[s^2]=\frac{1}{N}\sum\limits_{i=1}^Ns_i^2. \\
\end{aligned}
\label{eq:soft_rank}
\end{equation}
In Eq.~(\ref{eq:soft_rank}), the ordering score $s_i$ is linearly transformed to $S_i$, based on the mean and variance of all $N$ ordering scores, i.e., $\{s_1, s_2,\dotsc, s_N\}$. We perform such transformation to ensure that the mean and variance of $\{S_1, S_2,\dotsc, S_N\}$ are the same as those of a real rank variable of size $N$, which results in a similar distribution. Moreover, it also aligns with our loss design of enhancing the Spearman’s rank correlation coefficient~\cite{spearman1904proof}, as introduced later.
The consistency of mean and variance is verified in the following theorem.
\begin{theorem}
    The mean and standard deviation of $S_i$ are the same as those of any real rank variable $R$ for a sample size of $N$, i.e., 
    \begin{equation}
        \mu(S_i)=\mu(R), \quad \sigma(S_i)=\sigma(R).
    \end{equation}
    \label{theorem1}
\end{theorem}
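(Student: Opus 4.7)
The plan is to view equation~(\ref{eq:soft_rank}) as an affine transformation of a standardized quantity and to exploit the fact that affine transformations act predictably on mean and variance. The entire argument reduces to straightforward bookkeeping on first and second moments, with no probabilistic subtlety beyond the empirical expectation $\mathbb{E}[\cdot] = \tfrac{1}{N}\sum_{i=1}^N(\cdot)$.

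First I would introduce the standardized ordering scores
$Z_i := (s_i - \mathbb{E}[s])/\sqrt{\mathbb{E}[s^2] - (\mathbb{E}[s])^2}$.
Linearity of the empirical expectation gives $\mathbb{E}[Z] = 0$ by direct cancellation, and the definition of variance gives $\mathrm{Var}(Z) = 1$. Equation~(\ref{eq:soft_rank}) can then be rewritten in the affine form $S_i = a\cdot Z_i + b$, where $a = \sqrt{(N^2-1)/12}$ and $b$ is the additive constant appearing in the definition.

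Next I would recall the two closed-form identities $\sum_{i=1}^N i = N(N+1)/2$ and $\sum_{i=1}^N i^2 = N(N+1)(2N+1)/6$, from which I read off $\mu(R) = (N+1)/2$ and $\sigma(R) = \sqrt{(N^2-1)/12}$ for the uniform rank variable $R$ on $\{1,\ldots,N\}$. The scaling factor $a$ matches $\sigma(R)$ by inspection, so the standard-deviation claim $\sigma(S_i) = \sigma(R)$ follows immediately from the scaling rule $\sigma(a Z + b) = |a|\,\sigma(Z)$ applied to $Z$ of unit variance.

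Finally, by linearity of expectation, $\mu(S_i) = a\cdot\mathbb{E}[Z] + b = b$, so the remaining mean identity $\mu(S_i) = \mu(R)$ reduces to the algebraic check that the additive constant $b$ agrees with $(N+1)/2$. That single constant identification is the one place where care is required; everything else is a one-line application of the affine-transformation identities above. I do not expect any deeper obstacle, since the standardization step is self-contained and the uniform-rank moments are textbook.
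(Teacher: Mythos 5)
Your proposal follows essentially the same route as the paper's own proof: standardize the scores to a zero-mean, unit-variance quantity, compute $\mu(R)=(N+1)/2$ and $\sigma(R)=\sqrt{(N^2-1)/12}$ from the textbook sums, and read off the result from the affine form $S_i = aZ_i + b$. One caveat on the step you rightly single out as the one requiring care: the additive constant actually appearing in Eq.~(\ref{eq:soft_rank}) is $\tfrac{(N+1)(2N+1)}{6}$, which equals $\mathbb{E}[R^2]$ rather than $\mu(R)=\tfrac{N+1}{2}$, so the check $b=\mu(R)$ fails for the formula as printed (the paper's proof silently substitutes $\mu(R)$ for this constant); your argument is correct provided the constant is taken to be $\tfrac{N+1}{2}$, which is evidently what is intended.
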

\noindent We provide the proof in Appendix~\ref{app:theorem1}. %
According to Theorem~\ref{theorem1}, rank scores $\{S_1, S_2,\dotsc, S_N\}$ could represent an approximate rank as they share the same mean and variance.

\begin{table*}[!t]
\setlength\tabcolsep{6.8pt}
\centering	
\renewcommand{\arraystretch}{1.0}

\vspace{0.05in}
\resizebox{0.9\textwidth}{!}{
\begin{tabular}{lcccccccccc}
\toprule[1pt]
\textbf{Dataset} & \textbf{$|G|$} & \textbf{$|N|_\text{avg}$} & \textbf{$|E|_\text{avg}$} & \textbf{$d_\text{max}$} & \textbf{$d_\text{avg}$} & \textbf{$K_\text{avg}$} & $d_x$ & \textbf{\#Classes} & \textbf{Prediction Task} \\
\midrule
HCP-Task & 7,443 & 360 & 7,029.18 & 153 & 17.572 & 0.410 & 360 & 7 & Graph Classification \\
HCP-Gender & 1,078 & 1,000 & 45,578.61 & 413 & 45.579 & 0.466 & 1,000 & 2 & Graph Classification \\
HCP-Age & 1,065 & 1,000 & 45,588.40 & 413 & 45.588 & 0.466 & 1,000 & 3 & Graph Classification \\
HCP-FI & 1,071 & 1,000 & 45,573.67 & 413 & 45.574 & 0.466 & 1,000 & - & Graph Regression \\
HCP-WM & 1,078 & 1,000 & 45,578.61 & 413 & 45.579 & 0.466 & 1,000 & - & Graph Regression \\
\bottomrule[1pt]
\end{tabular}
}
\caption{The detailed statistics of datasets used in our experiments. $|G|$ denotes the number of graphs in each dataset, $|N|_\text{avg}$ and $|E|_\text{avg}$ represent the average number of nodes and edges, respectively. $d$ signifies the degree, and $K_\text{avg}$ represents the global clustering coefficient. The datasets encompass two types of prediction tasks: graph classification and graph regression.}
\label{tab:dataset_statistics}
\end{table*}

\noindent\textbf{Optimization Loss.} For any randomly sampled (real) order ${\boldsymbol{\phi}}$, 
we use $S_i^\phi$ to represent the rank of $v_i$ in $\boldsymbol{\phi}$. Since ${\boldsymbol{\phi}}$ is a real order, we know $S_i^\phi$ is an integer and $1\leq S_i^\phi \leq N$. Moreover, $S^\phi_i \neq S^\phi_j$ if $i\neq j$. Given the approximate rank scores $\{S_1, S_2,\dotsc, S_N\}$ of a learned order $\hat{\boldsymbol{\phi}}$, we optimize it according to the following loss:
\begin{equation}
\begin{aligned}
 \hspace{-.05in}   \mathcal{L}(\hat{\phi},{\Phi}_p, {\Phi}_n)
=\frac{\sum\limits_{\boldsymbol{\phi}\in{\Phi}_p}\sum\limits_{i=1}^N (S_i- {S}^\phi_i)^2}{\sum\limits_{\boldsymbol{\phi}\in{\Phi}_p}\sum\limits_{i=1}^N (S_i- {S}^\phi_i)^2+ \sum\limits_{\boldsymbol{\phi}\in{\Phi}_n}\sum\limits_{i=1}^N (S_i- {S}^\phi_i)^2},
\end{aligned}
    \label{eq:loss}
\end{equation}
Here $\Phi_n$ and $\Phi_n$ are the set of sampled positive orders and negative orders, respectively. $|\Phi_p|=N_p$ and $|\Phi_n|=N_n$.
To validate the effectiveness of using loss $\mathcal{L}(\hat{\phi},{\Phi}_p, {\Phi}_n)$ for optimization, we propose the following theorem.
\begin{theorem}
    Minimizing the loss $\mathcal{L}$ described in Eq.~(\ref{eq:loss}) equals maximizing the Spearman's rank correlation coefficient~\cite{spearman1904proof} between learned orders and good orders, while minimizing the coefficient between learned orders and bad orders.
    \begin{equation}
       \min \mathcal{L} (\hat{\phi},{\Phi}_p, {\Phi}_n)  \equiv \max \frac{\sum\limits_{\boldsymbol{\phi}\in{\Phi}_n}\left(1-r(\hat{\phi},\phi)\right)}{\sum\limits_{\boldsymbol{\phi}\in{\Phi}_p}\left(1-r(\hat{\phi},\phi)\right)},
    \end{equation}
    where the coefficient is calculated as:
    \begin{equation}
        r(\hat{\phi},\phi)=\rho_{S,S^\phi}=\frac{\text{cov}(S, S^\phi)}{\sigma_S \sigma_{S^\phi}}.
    \end{equation}
    \label{theorem2}
\end{theorem}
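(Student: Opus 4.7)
The plan is to reduce the loss $\mathcal{L}$ in Eq.~(\ref{eq:loss}) to a monotone function of the ratio $B/A$, where $A$ and $B$ aggregate $1-r(\hat{\phi},\phi)$ over the positive and negative sample sets, respectively. The key enabling tool is Theorem~\ref{theorem1}, which guarantees that $S$ and $S^\phi$ share the same mean $\mu=(N+1)/2$ and variance $v=(N^2-1)/12$; this moment-matching is precisely what lets the Pearson correlation on the continuous surrogate $S$ coincide with Spearman's coefficient $r(\hat{\phi},\phi)$ on the integer ranks $S^\phi$.

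First I would rewrite the inner quadratic term by centering at the shared mean $\mu$:
\begin{equation*}
\sum_{i=1}^{N}(S_i - S^\phi_i)^2 = \sum_{i=1}^N (S_i - \mu)^2 + \sum_{i=1}^N (S^\phi_i - \mu)^2 - 2\sum_{i=1}^N (S_i - \mu)(S^\phi_i - \mu).
\end{equation*}
By Theorem~\ref{theorem1}, each of the first two sums equals $Nv$. The cross term equals $N\cdot\mathrm{cov}(S, S^\phi)$, and since $\sigma_S\,\sigma_{S^\phi}=v$, the definition of $r(\hat{\phi},\phi)$ gives $\mathrm{cov}(S,S^\phi)=v\cdot r(\hat{\phi},\phi)$. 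Substituting yields the identity $\sum_{i=1}^{N}(S_i - S^\phi_i)^2 = 2Nv\bigl(1 - r(\hat{\phi},\phi)\bigr)$.

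Second I would plug this identity into Eq.~(\ref{eq:loss}). The factor $2Nv$ is independent of $\phi$ and cancels in the ratio, producing $\mathcal{L}(\hat{\phi},\Phi_p,\Phi_n)=A/(A+B)=1/(1+B/A)$, with $A=\sum_{\phi\in\Phi_p}(1-r(\hat{\phi},\phi))$ and $B=\sum_{\phi\in\Phi_n}(1-r(\hat{\phi},\phi))$. Because $r\le 1$ keeps both $A,B\ge 0$, the map $x\mapsto 1/(1+x)$ is strictly decreasing on $[0,\infty)$, so minimizing $\mathcal{L}$ is equivalent to maximizing $B/A$, which is exactly the right-hand side of the claimed equivalence.

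The only delicate point is reconciling the continuous surrogate $S_i$ with the traditionally integer-valued Spearman ranks, since one might worry that the tie-free Spearman formula $1-6\sum d_i^2/[N(N^2-1)]$ cannot be invoked here. Theorem~\ref{theorem1} was designed precisely to sidestep this: matching the first two moments of $S$ to those of a real rank variable makes $\sum_i(S_i-\mu)^2=Nv$ on the nose, so the Pearson-to-Spearman identification goes through without any extra tie-correction or rounding argument.
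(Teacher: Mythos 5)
Your proposal is correct and follows essentially the same route as the paper's proof: both hinge on Theorem~\ref{theorem1}'s moment matching to establish the identity $\sum_{i=1}^N (S_i-S_i^\phi)^2 = \tfrac{N(N^2-1)}{6}\bigl(1-r(\hat{\phi},\phi)\bigr)$ (you expand the centered square directly, the paper arrives at the same identity by rewriting $\mathrm{cov}(S,S^\phi)$ via $S_iR_i=\tfrac12(S_i^2+R_i^2-(S_i-R_i)^2)$), and both then cancel the constant in the ratio and invoke monotonicity of $x\mapsto 1/(1+x)$. Your observation that only $r\le 1$ (hence $A,B\ge 0$) is needed is in fact slightly more careful than the paper's claim that $0\le r_s\le 1$.
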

\noindent The proof of Theorem~\ref{theorem2} is provided in Appendix~\ref{app:theorem2}. According to Theorem~\ref{theorem2}, we know that optimizing the loss $\mathcal{L}$ as described in Eq.~(\ref{eq:loss}) can increase the Spearman's rank correlation coefficient between learned orders and good orders. Moreover, the objective also decreases the coefficient between learned orders and bad orders. In concrete, with the theoretical support from Theorem~\ref{theorem2}, we manage to optimize the order-learner with sampled good and bad orders. %

\subsection{Hierarchical Pathway Integration}

Although sequential models can extract long-range pathways, they are inherently limited to identifying a single pathway at a time. In contrast, the human brain typically relies on multiple pathways to process various behaviors and perform complex tasks, as different pathways often contribute unique and complementary information~\cite{morris2019weisfeiler}. 
For instance, in visual processing, the brain employs two parallel pathways: one along the dorsal visual cortex, which quickly processes broad, less detailed information, and another one along the ventral visual cortex, which handles slower but more detailed information~\cite{lee2016anatomy}.

To deal with the challenge of multiple pathways, we propose to learn numerous activation pathways from each order of brain regions with multiple sequential models. Moreover, the activation pathways can be present in different orders. To effectively learn from these diverse pathways, we propose a two-level hierarchical integration approach, across and within different orders. (1) We first utilize the Mixture of Experts (MoE) strategy to integrate multiple pathways within each sequential order of brain regions. (2) Next, we aggregate the representations across different orders to obtain a comprehensive representation of brain activity.


\paragraph{$\blacktriangleright$ Step 1: Pathway Aggregation within Each Order.}


Within each order, multiple sub-sequences may connect different sets of brain regions that appear as activation pathways, while they can be hard to extract with only one sequential model, due to the potential heterogeneity among pathways. Thus, we propose to learn multiple pathways simultaneously based on the MoE architecture, with each expert capturing different underlying pathways. To be specific, BrainMAP consists of multiple experts, each utilizing a different sequential model. To dynamically determine which experts are most suitable for a specific order, we design a gating function that ensures the similar pathways are consistently assigned to the same expert. In this manner, each expert specializes in capturing a specific type of pathway. 

Formally, considering an input order $\boldsymbol{\hat{\phi}}$ and $P$ experts, the aggregation is performed as follows:
\begin{equation}
\boldsymbol{z}^{\prime}=\sigma\left(\sum_{i=1}^P G_i(\boldsymbol{\hat{\phi}}) F_i(\boldsymbol{\hat{\phi}})\right) \text {, }
\end{equation}
where $F_i$ is the sequential model of the $i$-th expert. $G$ is the gating function that generates multiple decision scores with the input as sequentialized brain regions $\boldsymbol{\hat{\phi}}$, and $G(\boldsymbol{\hat{\phi}}) \in \mathbb{R}^P$ denotes the scores to choose $P$ experts for the graph. 
We employ an attention-based top-k gating design for $G$, which can be formalized with
\begin{equation}
    \begin{aligned}
& G(\boldsymbol{\hat{\phi}})=\operatorname{Softmax}(\operatorname{TopK}(Q(\boldsymbol{\hat{\phi}}), K)), \\
& Q(\boldsymbol{\hat{\phi}})=\operatorname{MLP}\left(\operatorname{Attention}\left(\mathbf{Q}, \mathbf{K}, \mathbf{V}\right)\right),\\
&
\mathbf{Q} = \mathbf{W}_Q \boldsymbol{h}, \quad \mathbf{K} = \mathbf{W}_K \boldsymbol{h}, \quad \mathbf{V} = \mathbf{W}_V \boldsymbol{h}, \\
&
\boldsymbol{h} = \mathbf{W}_I\boldsymbol{\hat{\phi}} + \operatorname{PE}(\boldsymbol{\hat{\phi}}),
\end{aligned}
\end{equation}
where $\operatorname{PE}$ denotes the sinusoidal positional encoding, which is utilized to inform the gating function with the order information of sequentialized graph representations. $\mathbf{W}_I$, $\mathbf{W}_Q$, $ \mathbf{W}_K$, and $\mathbf{W}_V$ are learnable parameters, and $\operatorname{Attention}$ denotes self-attention mechanism. Besides, $K$ denotes the number of selected experts ($K\leq P$). 
$\text{TopK}(Q(\boldsymbol{\hat{\phi}}),K)$ denotes that we keep the top $K$ values in $Q(\boldsymbol{\hat{\phi}})$, i.e.,
\begin{equation}
\begin{aligned}
        &\operatorname{TopK}(Q(\boldsymbol{\hat{\phi}}), K)_j\\
        &= \begin{cases}Q(\boldsymbol{\hat{\phi}})_j & \text { if } Q(\boldsymbol{\hat{\phi}})_j \text { is in  the top } K \text { values of } Q(\boldsymbol{\hat{\phi}}), \\ -\infty & \text { otherwise. }\end{cases}
\end{aligned}
\end{equation}
%

\paragraph{$\blacktriangleright$ Step 2: Pathway Aggregation across Different Orders.}
After the aggregation in Step 1, we obtain an output representation from each order.
To aggregate the pathway information across different orders, we compute the weighted sum over representations learned from these orders, and the weights are the maximum value of $Q(\boldsymbol{\hat{\phi}})$. In this manner, we achieve a final embedding for the input FC graph, i.e., 
\begin{equation}
    \boldsymbol{z} = \sum_{i=1}^M \operatorname{Max}(Q(\boldsymbol{\hat{\phi_i}})) \cdot \boldsymbol{z'}_i
\end{equation}
where $\boldsymbol{z'}_i$ is the representation learned from the $i$-th order. 

For the training of gating functions and experts in BrainMAP, we adopt the cross-entropy (CE) loss for classification and the mean absolute error (MAE) for regression tasks.

\section{Experiments}

In this section, we aim to answer the following research questions (RQs). \textbf{RQ1.} How well can BrainMAP perform on brain-related tasks compared to other alternatives? \textbf{RQ2.} How does each component contribute to the overall predictive performance? \textbf{RQ3.} How effectively can BrainMAP elucidate the rationale behind its predictive outcomes?
\textbf{RQ4.} What impact does the design of MoE have on performance?

\begin{table*}[]
\centering
		\setlength\tabcolsep{9pt}
\renewcommand{\arraystretch}{1}
\begin{tabular}{lcccccc}
\toprule
\textbf{Dataset} & {HCP-Task$\uparrow$} & {HCP-Gender$\uparrow$} & {HCP-Age$\uparrow$} & {HCP-FI$\downarrow$} & {HCP-WM}$\downarrow$ \\
\midrule
GCN            & 86.29~$(\pm 0.98)$	& 76.03~$(\pm 2.40)$	& 44.27~$(\pm 2.69)$	& 11.49~$(\pm 0.15)$	& 3.95~$(\pm 0.05)$  \\
GAT            & 85.60~$(\pm 1.26)$	& 75.62~$(\pm 2.22)$	& 44.48~$(\pm 2.35)$	& 13.69~$(\pm 0.52)$	& 4.06~$(\pm 0.11)$ \\
SAGE       & 84.49~$(\pm 0.57)$	& 74.69~$(\pm 3.50)$	& 45.83~$(\pm 1.78)$	& \underline{11.34}~$(\pm 0.12)$	& 3.99~$(\pm 0.06)$ \\
ResGCN & 93.75~$(\pm 0.35)$	& 76.75~$(\pm 0.65)$	& 43.54~$(\pm 0.90)$	& 11.48~$(\pm 0.29)$	& \underline{3.92}~$(\pm 0.04)$ \\
GraphGPS   & 92.13~$(\pm 2.00)$	& 76.85~$(\pm 1.54)$	& 45.84~$(\pm 3.21)$	& 11.37~$(\pm 0.86)$	& 3.98~$(\pm 0.04)$ \\
Graph-Mamba     & \underline{94.17}~$(\pm 0.86)$	& \underline{77.16}~$(\pm 3.13)$	& \underline{46.35}~$(\pm 2.73)$	& 11.51~$(\pm 0.88)$	& 3.94~$(\pm 0.14)$ \\
\midrule
\textbf{BrainMAP}     & \textbf{94.74}~$(\pm 0.07)$	& \textbf{78.92}~$(\pm 0.49)$	& \textbf{48.44}~$(\pm 1.65)$	& \textbf{10.75}~$(\pm 0.61)$	& \textbf{3.81}~$(\pm 0.03)$ \\
\bottomrule
\end{tabular}
\caption{Performance comparison of different models across various datasets. The best performance and the second-best performance are in bold and underlined, respectively. All experiments are repeated with 3 different random seeds. } 
\label{tab:main}
\end{table*}

\subsection{Experimental Settings}
We provide a brief introduction to the experimental settings. For the sequential model in our framework, we utilize the Mamba~\cite{gu2023mamba}, which is particularly effective in capturing long-range dependencies.
The implementation details are explained in Appendix~\ref{app:implementation}. %

\subsubsection{Datasets.}
In our experiments, we consider the Human Connectome Project (HCP) dataset~\cite{van2013wu}, which is a comprehensive publicly available neuroimaging dataset that includes both imaging data and a wide range of behavioral and cognitive data. We process the HCP-Task dataset by parcellating it into 360 distinct brain regions. With respect to other datasets, we use the processed ones from the NeuroGraph benchmark~\cite{said2023neurograph}.

\subsubsection{Baselines.}
We compare our framework with baselines leveraged by the NeuroGraph benchmark and two state-of-the-art models GraphGPS~\cite{rampavsek2022recipe} and Graph-Mamba~\cite{wang2024graph} that can extract long-range dependencies within the graph data.



\begin{table}[!]
\centering
\renewcommand{\arraystretch}{1.}
\resizebox{\columnwidth}{!}{
\begin{tabular}{lcccc}
\toprule
\textbf{Dataset}        & {\textbf{BrainMAP}}          & {w/o LR} & {w/o MoE} & {w/o LB} \\
\midrule

{HCP-Task} $\uparrow$ & \textbf{94.74} &      \underline{94.56}                   &          94.45               &       94.45                   \\
{HCP-Gender} $\uparrow$ &  \textbf{78.92} & 78.09 & 77.58 & \underline{78.50}  \\
{HCP-Age} $\uparrow$ & \textbf{48.44} & \underline{48.13} & 47.81 & 47.54\\
{HCP-FI} $\downarrow$ & \textbf{3.81} & \underline{3.84} & 3.91 & 3.91\\
{HCP-WM} $\downarrow$ & \textbf{10.75} & \underline{10.96} & 11.19 & 11.01 \\
\bottomrule
\end{tabular}
}
\caption{Ablation study of BrainMAP on various datasets. }
\label{tab:ablation}
\end{table}

\subsection{Main Results}
To answer \textbf{RQ1}, we first evaluate the performance of BrainMAP in comparison to all baselines on the HCP datasets. We make the following observations from empirical results in Table~\ref{tab:main}. (\romannumeral1) BrainMAP outperforms all baselines across various benchmarks with improvements up to $4.09\%$ over the state-of-the-art, which demonstrates its ability to extract long-range dependencies between brain regions along multiple pathways. (\romannumeral2) BrainMAP and Graph-Mamba surpass traditional GNN models by a large margin, with BrainMAP showing improvements of up to $12.13\%$ on HCP-Task. The observation corroborates the benefit of extracting activation pathways for brain-related tasks. (\romannumeral3) BrainMAP consistently outperforms Graph-Mamba, which demonstrates the effectiveness of learning multiple pathways.

\subsection{Ablation Studies}
To address \textbf{RQ2}, we conduct ablation studies on BrainMAP by removing different components, where \textit{w/o LR} refers to the removal of the structure sequentializer and \textit{w/o LB} indicates the exclusion of the load balancing loss for the MoE. The empirical results in Table~\ref{tab:ablation} lead to the following observations. (\romannumeral1) Both the MoE and the structure sequentializer contribute to the overall performance, which suggests the importance of effective structure sequentialization and the extraction of multiple pathways. (\romannumeral2) The MoE appears to be the most critical component for overall performance, indicating the significant role of learning multiple pathways. (\romannumeral3) The removal of load balancing loss results in a reduction in the overall performance, which illustrates the importance of broad and balanced activation of experts.

\begin{figure}
    \centering
    \includegraphics[width=0.85\columnwidth]{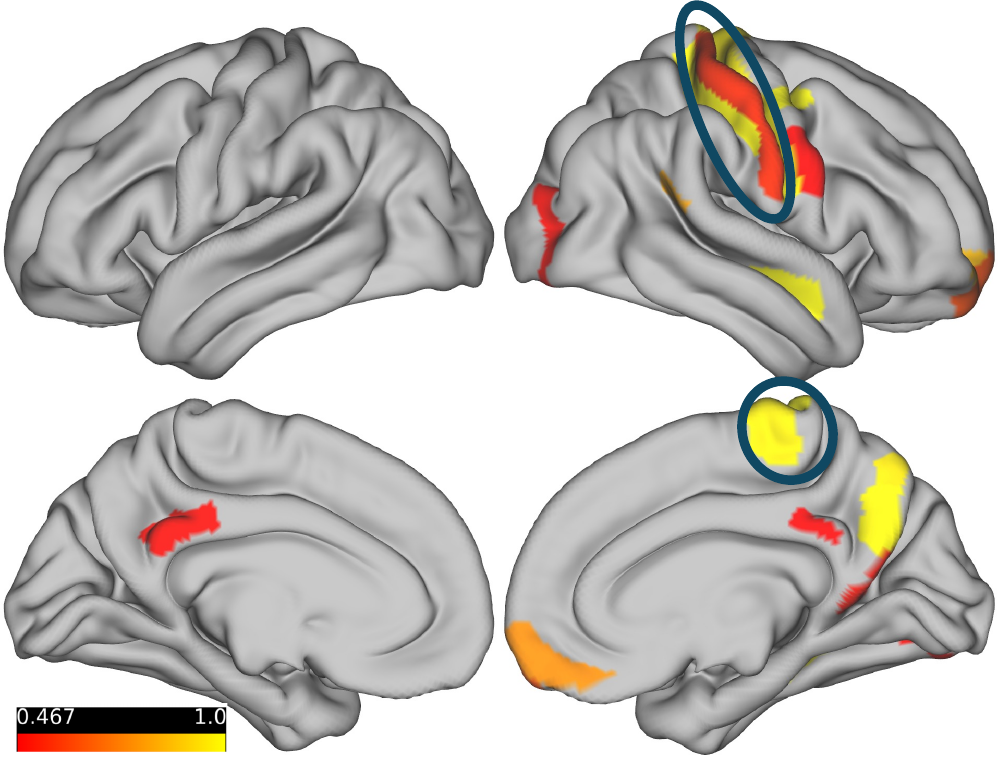}
    \vspace{.1in}
    \caption{Interpretation results of BrainMAP for the task MOTOR in HCP-Task. The average salient regions from random samples. The color bar ranges from 0.4 to 1. The bright-yellow color indicates a high score, while dark-red color indicates a relatively lower score. The ground-truth brain regions given by domain experts are circled in blue.}
    \label{fig:interpret}
\end{figure}

\subsection{Explanation Study}
To answer \textbf{RQ3} and better comprehend the prediction decisions made by different models, we aim to identify the salient brain regions that contribute the most to the predictions. To be more specific, we seek to identify the activated brain regions during a specific task MOTOR from the HCP-Task dataset. We first adopt explanation models to calculate the importance scores of brain regions during the task MOTOR of several random samples from the HCP-Task dataset, where the scores are then averaged to assess the interpretation ability. We adopt the commonly used GNNExplainer~\cite{ying2019gnnexplainer} for ResGCN and GraphGPS, and a Mamba-specific explanation method for Graph-Mamba and BrainMAP to calculate the importance scores. We select the salient brain regions, which are then compared to the ground-truth activated brain region of the HCP-Task given by domain experts, where the correspondence is  measured with Hit@10, Hit@30, and Mean Reciprocal Rank (MRR). The results in Table~\ref{tab:interpret} showcase that BrainMAP achieves higher precision in locating the activated brain regions for the MOTOR task, which demonstrates its reliability and effectiveness. Apart from the quantitative analysis of the interpretation ability of the BrainMAP, we also visualize the interpretation results in Fig.~\ref{fig:interpret}, where top-ranked brain regions of BrainMAP are highlighted with different colors, and ground-truth activated brain regions given by domain experts are circled. We could make the observation that BrainMAP is able to identify several ground-truth regions, which further demonstrates its effectiveness.

\begin{table}[!t]
\centering
\renewcommand{\arraystretch}{1.}
\begin{tabular}{lccc}
\toprule
\textbf{Model}        & Hit@10 &  Hit@30      & MRR \\
\midrule
{ResGCN} & 6.25	& 21.88	& 3.07  \\
{GraphGPS}& 8.75	& 24.38	& 3.13\\
{Graph-Mamba} & \underline{15.00}	& \underline{31.25}	& \underline{6.27}\\
\midrule
\textbf{BrainMAP} & \textbf{19.38}	& \textbf{33.75}	& \textbf{9.26}\\
\bottomrule
\end{tabular}
\caption{Interpretation results of BrainMAP for the task MOTOR from HCP-Task, where the alignment between salient brain regions obtained from different models and the ground truth given by domain experts is measured by three metrics.} 
\label{tab:interpret}
\end{table}

\begin{figure}
    \centering
    \begin{subfigure}[t]{0.49\columnwidth} 
        \centering
        \includegraphics[width=\textwidth]{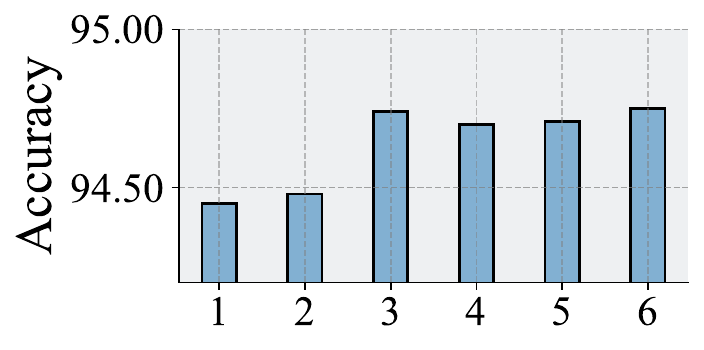} 
                \vspace{-0.25in}
        \caption{{\#Experts for HCP-Task}} 
        \label{fig:ratio_a} 
    \end{subfigure}
    \hfill
    \begin{subfigure}[t]{0.49\columnwidth} 
        \centering
        \includegraphics[width=\textwidth]{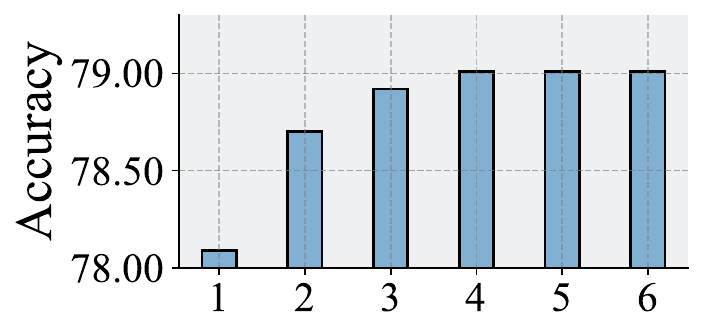} 
                        \vspace{-.25in}
        \caption{{\#Experts for HCP-Gender}} 
        \label{fig:ratio_a} 
    \end{subfigure}
    \begin{subfigure}[t]{0.49\columnwidth} 
        \centering
        \includegraphics[width=\textwidth]{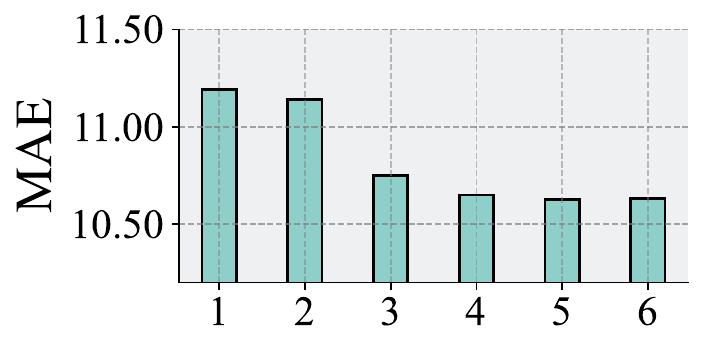} 
                \vspace{-.25in}
        \caption{{\#Experts for HCP-WM}} 
        \label{fig:ratio_a} 
    \end{subfigure}
    \hfill
    \begin{subfigure}[t]{0.49\columnwidth} 
        \centering
        \includegraphics[width=\textwidth]{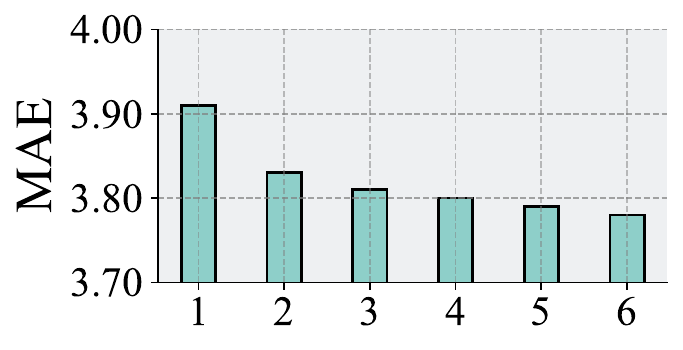} 
                \vspace{-.25in}
        \caption{{\#Experts for HCP-FI}} 
        \label{fig:ratio_a} 
    \end{subfigure}
    \caption{The results of varying the number of experts in the MoE on four HCP benchmark datasets. } 
    \label{fig:MoE}
\end{figure}

\subsection{MoE Analysis}
The MoE is critical in extracting multiple pathways. 
To answer \textbf{RQ4}, we evaluate the impact of varying the number of experts in the MoE on the model's performance. We could make the following observations from Fig.~\ref{fig:MoE}. (\romannumeral1) The performance of BrainMAP improves as the number of experts increases up to 3. It can be attributed to the fact that experts might be insufficient to extract diverse pathways necessary for comprehensive prediction. (\romannumeral2) The accuracy remains nearly constant once the number of experts exceeds 4, which can be attributed to the limited number of potential pathways in the brain. 
To gain a deeper understanding of the MoE component, we analyze the activation distribution across different layers of BrainMAP, as shown in Fig.~\ref{fig:activation}. The results illustrated that BrainMAP consistently maintains high activation rates, with a minimum of $66.7\%$ activation rate for HCP-Gender. The findings further suggest that BrainMAP effectively extracts multiple pathways, as evidenced by the activation of diverse experts.

\begin{figure}
    \centering
    \includegraphics[width=0.95\columnwidth]{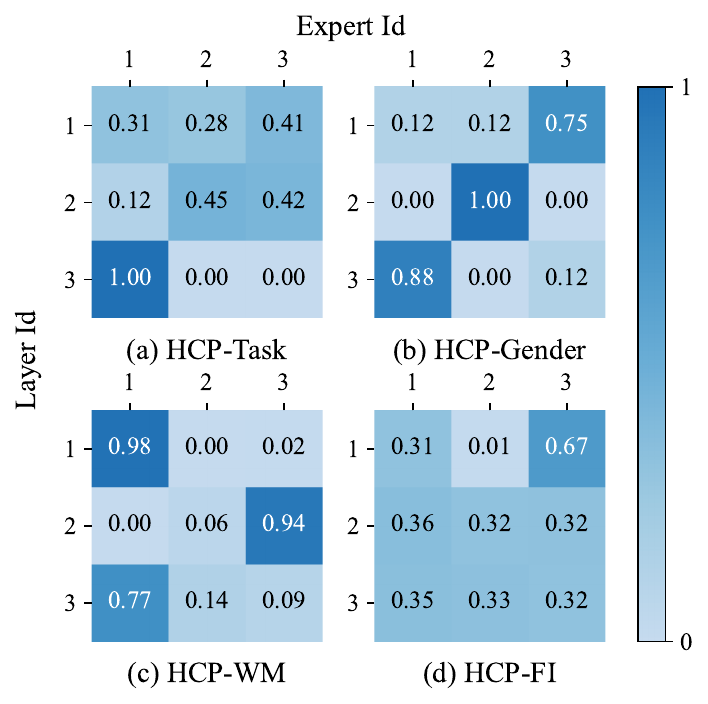}
    \caption{The activation distribution of the experts across different model layers. BrainMAP consistently maintains high activation rates on various HCP datasets.}
    \label{fig:activation}
\end{figure}

\section{Conclusion}
Despite significant progress has been made in understanding brain activity through functional connectivity (FC) graphs, challenges persist in effectively capturing and interpreting the complex, long-range dependencies and multiple pathways that are inherent in these graphs. 
In this work, we introduce BrainMAP, a novel framework designed to extract multiple long-range activation pathways with adaptive sequentialization and pathway aggregation. Experiments demonstrate the effectiveness of BrainMAP in extracting underlying activation pathways for predictions tasks.

\section{Acknowledgments}
This work is supported in part by the National Science Foundation under grants IIS-2006844, IIS-2144209, IIS-2223769, IIS-2331315, CNS-2154962, BCS-2228534, and CMMI-2411248, the Commonwealth Cyber Initiative Awards under grants VV-1Q24-011, VV-1Q25-004, and the research gift funding from Netflix and Snap. Xinyu Zhao and Tianlong Chen are supported by NIH OT2OD038045-01 and UNC SDSS Seed Grant.

\appendix





\bibliography{aaai24}

\newpage
\appendix

\section{Proof of Theorem 4.1}\label{app:theorem1}
\begin{customthm}{4.1}
    The mean and standard deviation of $S_i$ are the same as those of any real rank variable $R$ for a sample size of $N$, i.e., 
    \begin{equation}
        \mu(S_i)=\mu(R), \quad \sigma(S_i)=\sigma(R).
    \end{equation}
    \label{theorem1}
\end{customthm}

    \begin{proof}
We start by showing that $\widetilde{S}_i=\frac{s_i - \mathbb{E}[s] }{\sqrt{\mathbb{E}[s^2] - (\mathbb{E}[s])^2}}$ is the standardized value of $s_i$, i.e., $\mu(\widetilde{S}_i)=0$ and $\sigma(\widetilde{S}_i)=1$.
\begin{equation}
    \mu(\widetilde{S}_i)=\mathbb{E}\left[\frac{s_i - \mathbb{E}[s] }{\sqrt{\mathbb{E}[s^2] - (\mathbb{E}[s])^2}}\right]=\frac{\mathbb{E}[s]-\mathbb{E}[s]}{\sqrt{\mathbb{E}[s^2] - (\mathbb{E}[s])^2}}=0
\end{equation}
\begin{equation}
\begin{aligned}
       \sigma^2(\widetilde{S}_i)
       &=\mu(\widetilde{S}_i^2)-\mu^2(\widetilde{S}_i)\\
       &=\mu(\widetilde{S}_i^2)\\
       &=\mathbb{E}\left[\frac{(s_i-\mathbb{E}[s])^2}{\mathbb{E}[s^2] - (\mathbb{E}[s])^2}\right]\\
       &=\frac{\mathbb{E}[(s_i-\mathbb{E}[s])^2]}{\mathbb{E}[s^2] - (\mathbb{E}[s])^2}\\
       &=\frac{\mathbb{E}[s^2] - (\mathbb{E}[s])^2}{\mathbb{E}[s^2] - (\mathbb{E}[s])^2}\\
       &=1
\end{aligned}
\end{equation}
For a real rank variable $R$ of sample size $N$, we denote $R_i$ as the corresponding rank of the $i$-th sample. By definition of rankings, we know $1\leq R_i\leq N$, $i=1,2,\dotsc, N$, and all $R_i$ are distinct integers. We calculate the mean and standard deviation of $R$ as follows. Firstly, by definition of rankings,  we can consider $R$ as a random variable that is uniformly distributed on $\{1,2,\dotsc, N\}$. Thus, we can obtain
\begin{equation}
    \mu(R)=\mathbb{E}[R]=\frac{1}{N} \sum_{i=1}^N N=\frac{(N+1)}{2}, 
    \end{equation}
    \begin{equation}
    \mathbb{E}\left[R^2\right]=\frac{1}{N} \sum_{i=1}^N i^2=\frac{(N+1)(2 N+1)}{6}.
\end{equation}
Therefore, 
\begin{equation}
\begin{aligned}
    \sigma^2(R)&
    =\mathbb{E}\left[R^2\right]-(\mathbb{E}[R])^2 \\
    &=\frac{(N+1)(2 N+1)}{6} - \left(\frac{(N+1)}{2}\right)^2\\
    &=\frac{N^2-1}{12}.
    \end{aligned}
\end{equation}
Therefore, with the calculated mean and variance of $R$, we can rewrite Eq. (4) as follows:
\begin{equation}
    S_i=\widetilde{S}_i\cdot\sigma(R)+\mu(R).
\end{equation}
Since $\mu(\widetilde{S}_i)=0$ and $\sigma(\widetilde{S}_i)=1$, we know the linear transformation of $\widetilde{S}_i$ will accordingly change the mean and variance. Hence, we have $\mu(S_i)=\mu(R)$ and $\sigma(S_i)=\sigma(R)$.
\end{proof}

\begin{table*}[ht]
\centering
		\setlength\tabcolsep{9pt}
\renewcommand{\arraystretch}{1}
\begin{tabular}{lcccccc}
\toprule
\textbf{Dataset} & {HCP-Task} & {HCP-Gender} & {HCP-Age} & {HCP-FI} & {HCP-WM} \\
\midrule
GCN            & 3.17~$(\pm 0.06)$	& 8.09~$(\pm 9.17)$	& 3.13~$(\pm 2.56)$	& 3.98~$(\pm 3.52)$	& 3.52~$(\pm 5.18)$ \\
GAT            & 6.47~$(\pm 0.01)$	& 9.27~$(\pm 0.14)$	& 2.62~$(\pm 0.08)$	& 4.06~$(\pm 0.76)$	& 4.05~$(\pm 1.61)$ \\
SAGE     & 2.49~$(\pm 0.01)$	& 10.50~$(\pm 3.72)$	& 4.44~$(\pm 0.03)$	& 2.03~$(\pm 0.29)$	& 1.82~$(\pm 0.04)$ \\
ResGCN & 22.94~$(\pm 0.03)$	& 20.82~$(\pm 1.25)$	& 23.04~$(\pm 0.69)$	& 21.45~$(\pm 1.92)$	& 15.78~$(\pm 0.46)$ \\
GraphGPS  & 29.37~$(\pm 3.65)$	& 22.54~$(\pm 0.13)$	& 16.83~$(\pm 0.16)$	& 21.22~$(\pm 0.14)$	& 14.44~$(\pm 0.78)$ \\
Graph-Mamba   & 17.78~$(\pm 1.10)$	& 14.71~$(\pm 0.40)$	& 6.22~$(\pm 0.08)$	& 5.24~$(\pm 0.01)$	& 6.61~$(\pm 0.14)$ \\
{BrainMAP}  & 34.93~$(\pm 2.73)$	& 16.91~$(\pm 0.30)$	& 12.27~$(\pm 0.01)$	& 9.23~$(\pm 0.02)$	& 12.62~$(\pm 0.01)$ \\
\bottomrule
\end{tabular}
\caption{The average training time per epoch (in seconds) on 4 A100 GPUs.} 
\label{tab:main}
\end{table*}

\section{Proof of Theorem 4. 2}\label{app:theorem2}
\begin{customthm}{4.2}
    Minimizing the loss $\mathcal{L}$ described in Eq.~(6) equals maximizing the Spearman's rank correlation coefficient~\cite{spearman1904proof} between learned orders and good orders, while minimizing the coefficient between learned orders and bad orders.
    \begin{equation}
       \min \mathcal{L} (\hat{\phi},{\Phi}_p, {\Phi}_n)  \equiv \max \frac{\sum\limits_{\boldsymbol{\phi}\in{\Phi}_n}\left(1-r(\hat{\phi},\phi)\right)}{\sum\limits_{\boldsymbol{\phi}\in{\Phi}_p}\left(1-r(\hat{\phi},\phi)\right)},
    \end{equation}
    where the coefficient is calculated as:
    \begin{equation}
        r(\hat{\phi},\phi)=\rho_{S,S^\phi}=\frac{\text{cov}(S, S^\phi)}{\sigma_S \sigma_{S^\phi}}.
    \end{equation}
    \label{theorem2}
\end{customthm}
\begin{proof}
We start by proving that minimizing $\sum\limits_{i=1}^N (S_i- R_i)^2$ equals maximizing the Spearman's rank correlation coefficient~\cite{spearman1904proof} between $S$ and $R$
\begin{equation}
    r_s = \rho_{S,R}=\frac{\text{cov}(S, R)}{\sigma_S \sigma_{R}},
\end{equation}
where $R$ is a rank variable with a sample size of $N$, and $R_i$ is the corresponding rank of the $i$-th sample. Therefore, we have $1\leq R_i\leq N$, $i=1,2,\dotsc, N$, and all $R_i$ are distinct integers. Moreover, $\text{cov}(S, R)$ is the covariance of $S$ and $R$, and $\sigma_S$ and $\sigma_{R}$ are their standard deviations. Particularly, the covariance can be calculated as follows:
\begin{equation}
\begin{aligned} 
\text{cov}(S, R)
& =\mathbb{E}[SR]-\mathbb{E}[S]\mathbb{E}[R]\\
&=\frac{1}{N} \sum_{i=1}^N S_i R_i-\overline{S} \overline{R}\\ 
& =\frac{1}{N} \sum_{i=1}^N \frac{1}{2}\left(S_i^2+R_i^2-(S_i-R_i)^2\right)-\overline{S} \overline{R}\\ 
& =\frac{1}{2} \frac{1}{N} \sum_{i=1}^N R_i^2+\frac{1}{2} \frac{1}{N} \sum_{i=1}^N S_i^2-\frac{1}{2 N} \sum_{i=1}^N (S_i-R_i)^2\\
&\quad-\overline{S} \overline{R} \\ 
& =\frac{1}{2n} \sum_{i=1}^N(S_i^2+R_i^2)-\overline{S} \overline{R}-\frac{1}{2 N} \sum_{i=1}^N (S_i-R_i)^2 \\ 
\end{aligned}
\end{equation}
According to Theorem~\ref{theorem1}, we know $S$ and $R$ have the same mean and variance. Therefore, we know
\begin{equation}
    \frac{1}{N}\sum_{i=1}^N S_i^2=\frac{1}{N}\sum_{i=1}^N R_i^2, \quad \overline{S}=\overline{R}.
\end{equation}
In this way, we rewrite $\text{cov}(S, R) $ as follows:
\begin{equation}
    \begin{aligned} 
\text{cov}(S, R) 
& =\frac{1}{2N} \sum_{i=1}^N(S_i^2+R_i^2)-\overline{S} \overline{R}-\frac{1}{2 N} \sum_{i=1}^N (S_i-R_i)^2\\
& =\left(\frac{1}{N} \sum_{i=1}^NR_i^2-(\overline{R})^2\right)-\frac{1}{2 N} \sum_{i=1}^N (S_i-R_i)^2\\
& =\sigma_R^2-\frac{1}{2 N} \sum_{i=1}^N (S_i-R_i)^2 \\ 
& =\sigma_R \sigma_S-\frac{1}{2 N} \sum_{i=1}^N (S_i-R_i)^2.
\end{aligned}
\end{equation}
As such, we know
\begin{equation}
    \begin{aligned} 
    r_s
    &=\frac{\text{cov}(S, R)}{\sigma_S \sigma_{R}}\\
    &=\frac{\sigma_R \sigma_S-\frac{1}{2 N} \sum_{i=1}^N (S_i-R_i)^2}{\sigma_S \sigma_{R}}\\
    &=1- \frac{\sum_{i=1}^N (S_i-R_i)^2}{2N(N^2-1)/12}\\
    &=1-\frac{6\sum_{i=1}^N (S_i-R_i)^2}{N(N^2-1)}
    \end{aligned}
    \label{eq:rs}
\end{equation}
With the above equation, we can re-write $\sum_{i=1}^N (S_i-R_i)^2$ as follows:
\begin{equation}
    \sum_{i=1}^N (S_i-R_i)^2=\frac{N(N^2-1)(1-r_s)}{6}.
\end{equation}
As $N(N^2-1)/6$ is a constant when $N$ is fixed, we can re-write loss $\mathcal{L}_o$ in Eq.~(6) as follows:
    \begin{equation}
    \begin{aligned}
\mathcal{L}_o
&=\frac{\sum\limits_{\boldsymbol{\phi}\in{\Phi}_p}(1-r(S,S^\phi))}{\sum\limits_{\boldsymbol{\phi}\in{\Phi}_p}(1-r(S,S^\phi))+\sum\limits_{\boldsymbol{\phi}\in{\Phi}_n}(1-r(S,S^\phi))}\\
&=\frac{1}{1+\frac{\sum\limits_{\boldsymbol{\phi}\in{\Phi}_n}(1-r(S,S^\phi))}{\sum\limits_{\boldsymbol{\phi}\in{\Phi}_p}(1-r(S,S^\phi))}}.
    \end{aligned}
    \label{eq:loss2}
    \end{equation}
According to Eq.~(\ref{eq:rs}), we know $0\leq r_s \leq 1$. Therefore, minimizing the loss in Eq.(\ref{eq:loss2}) equals maximizing $\sum\nolimits_{\boldsymbol{\phi}\in{\Phi}_n}(1-r(S,S^\phi))/\sum\nolimits_{\boldsymbol{\phi}\in{\Phi}_p}(1-r(S,S^\phi))$, which is always a positive number. Thus, we have
    \begin{equation}
       \min \mathcal{L}_o \equiv \max \frac{\sum\limits_{\boldsymbol{\phi}\in{\Phi}_n}(1-r(\hat{\phi},\phi))}{\sum\limits_{\boldsymbol{\phi}\in{\Phi}_p}(1-r(\hat{\phi},\phi))}.
    \end{equation}

\end{proof}

\section{Implementation}\label{app:implementation}
The experiments are implemented with Pytorch 2.0.1 \cite{paszke2019pytorch} on 4 NVIDIA A100 GPUs each with 80GB memory. Unless otherwise stated, we set the number of experts $K$, orders $M$, and layers to 3, 2, 3, respectively. We obtain other best hyper-parameters via grid search with the range of learning rate from $10^{-1}$ to $10^{-3}$, and weight decay from $10^{-3}$ to $10^{-5}$, with each configuration run for 100 epochs. The batch size is set to be 16, and the model is trained with Adam optimizer. For the implementation of GNN baselines, we adopt the best settings from \textit{graphgym}\footnotemark[1]\footnotetext[1]{https://github.com/snap-stanford/GraphGym}. And we use the official public available code for implementation of GraphGPS\footnotemark[2]\footnotetext[1]{https://github.com/rampasek/GraphGPS} and Graph-Mamba\footnotemark[3]\footnotetext[3]{https://github.com/bowang-lab/Graph-Mamba}.

\section{Baseline Details}
We compare our framework with baselines used by the NeuroGraph benchmark and two state-of-the-art models GraphGPS and Graph-Mamba that extract long-range dependencies within the graph data. GraphGPS employs a modular framework that integrates SE, PE, MPNN, and a graph transformer, where it allows the replacement of fully-connected Transformer attention with its sparse alternatives. Graph-Mamba is the pioneering work to applies state space models (SSMs) for non-sequential graph data, where it captures long-range dependencies with linear time complexity.

\section{Efficiency Study}
In this section, we evaluate the computational efficiency of different models by analyzing the average training time per epoch across various Human Connectome Project (HCP) datasets, using 4 A100 GPUs. The results are summarized in Table 1, where training time is measured in seconds.

\end{document}